\documentclass[conference]{IEEEtran}
\IEEEoverridecommandlockouts

\usepackage{cite}
\usepackage{amsmath,amssymb,amsfonts}
\usepackage{graphicx}
\usepackage{textcomp}
\usepackage{xcolor}
\usepackage{booktabs}
\usepackage{array}
\usepackage{url}

\newcolumntype{L}[1]{>{\raggedright\arraybackslash}p{#1}}
\newtheorem{proposition}{Proposition}
\newtheorem{definition}{Definition}
\newcommand{\E}{\mathbb{E}}

\begin{document}

\title{Beyond Reward Hacking: Proxy Divergence Across\\
Four Layers of a Staged Humanoid Learning Pipeline}

\author{
\IEEEauthorblockN{Arunabh Bora}
\IEEEauthorblockA{\textit{London, United Kingdom}\\
arunabhbora.ab@gmail.com}
}

\maketitle

%=======================================================================
\begin{abstract}
A reinforcement-learning (RL) pipeline for a legged robot is assembled from proxies. A
reward stands in for intended behaviour, a curriculum gate stands in for competence, an
evaluation statistic stands in for robustness, and a reference motion stands in for an
achievable skill. The traditional view treats only the first of these as optimised
against, and so locates specification failure (reward hacking) in the reward alone. I
argue that all four are proxies in the same formal sense, that each has a characteristic
divergence mechanism, and that each admits a reformulation that closes it. For every
layer I state the traditional formulation, derive the condition under which it diverges
from its target, and give the alternative: first-order (L1) costs where quadratic
kernels are flat, peak and outcome statistics where curriculum gates average, gate
reachability and information checks, deterministic and phase-desynchronised evaluation,
curriculum state treated as part of the model, feasibility-first reference design with
residual feed-forward, and function-preserving input widening that lets one policy grow
instead of being retrained. The arguments are illustrated by measurements from one
continuous lineage of a PPO policy for a simulated 1.91\,m humanoid, grown over four
stages and 13{,}500 iterations on a single laptop GPU. Among them, a curriculum gate built
on averaged error advanced at its rate limit on every check while the skill it gated was
absent, and a batched push test whose synchronised resets aliased the gait phase ranked a
0.5\,m/s push as more dangerous than a 2.0\,m/s one.
\end{abstract}

\begin{IEEEkeywords}
reinforcement learning, reward design, curriculum learning, evaluation methodology,
motion imitation, humanoid robots, continual learning
\end{IEEEkeywords}

%=======================================================================
\section{Introduction}
%=======================================================================

\IEEEPARstart{G}{PU}-parallel simulation has made it routine to train a locomotion policy
in hours on one machine \cite{rudin2021,mittal2025}. The resulting pipeline has a
standard shape: a weighted sum of shaped reward terms, a performance-gated curriculum over
terrain and commands, batched evaluation in the simulator, and, for skills beyond
locomotion, a tracking reward towards a reference motion \cite{peng2018}. Each of these
components replaces something physical with something computable.

The literature on specification gaming \cite{skalse2022,pan2022,gao2023,laidlaw2025} studies
the gap between a proxy and its target, but in robot learning it is applied almost
exclusively to the reward. The other components are treated as plumbing: a curriculum
threshold is a tuning constant, an evaluation script is assumed to measure what it
reports, and a reference motion is assumed to be achievable. My thesis is that this
traditional view is incomplete. A curriculum gate is optimised against just as a reward
is, because the policy that passes it is the one the gate selects. An evaluation
statistic is a proxy whose bias can be created by the harness itself, with no
optimisation at all. A reference motion is a proxy for a skill, and it can be wrong in a
way that no reward design can repair.

\subsection{Contributions}
\begin{enumerate}
\item \textbf{A four-layer formulation of proxy divergence} (reward, curriculum,
      measurement, reference), with, for each layer, the traditional formulation, the
      condition under which it diverges, and a reformulation (Sections
      \ref{sec:reward}--\ref{sec:reference}, summarised in Table~\ref{tab:summary}).
\item \textbf{Curriculum gates as estimators}: a dilution bound showing that gates on
      error averaged over joints and time cannot reject large localised failures, a
      reachability condition, an information argument for counting gate refusals, and the
      observation that curriculum state belongs to the model, not the environment
      (Section~\ref{sec:curriculum}).
\item \textbf{Two systematic evaluation biases} that do not shrink with sample size:
      outcomes manufactured by exploration noise, and phase aliasing from synchronised
      resets, with corrected estimators (Section~\ref{sec:measurement}).
\item \textbf{Feasibility-first reference design}: why tracking an infeasible reference
      converges to a qualitatively different behaviour regardless of reward shaping, and
      a static test, a correction and a residual action that avoid it
      (Section~\ref{sec:reference}).
\item \textbf{Function-preserving policy growth} through input widening and split
      authority, as an alternative to retraining from scratch whenever the observation
      changes (Section~\ref{sec:growth}).
\end{enumerate}

%=======================================================================
\section{Related Work}
%=======================================================================

\textbf{RL for legged and humanoid robots.} Deep RL now produces deployed locomotion and
whole-body controllers \cite{tang2025}, including zero-shot humanoid walking
\cite{radosavovic2024}, agile whole-body skills with sim-to-real alignment \cite{he2025},
and general motion tracking \cite{liao2025}. Most of these pipelines share the structure
analysed here, built on GPU-parallel simulators such as Isaac Lab \cite{mittal2025}.

\textbf{Reward design and specification gaming.} Practical locomotion rewards are weighted
sums of heuristic terms \cite{rudin2021}, and recent work automates their design with
language models \cite{ma2024}. Skalse \emph{et al.} \cite{skalse2022} showed that
unhackable proxy rewards are very restricted; Pan \emph{et al.} \cite{pan2022} found that
more capable agents exploit misspecified rewards more, sometimes through abrupt phase
transitions; Gao \emph{et al.} \cite{gao2023} measured scaling laws for over-optimising a
proxy; and Laidlaw \emph{et al.} \cite{laidlaw2025} defined reward hacking through the
correlation between proxy and true reward under a base policy. All of this work treats the
reward as the proxy. I extend the same lens to curriculum gates, evaluation statistics and
reference motions.

\textbf{Curricula.} In locomotion, terrain and command levels are raised when agents
perform well \cite{rudin2021}, and adaptive command curricula are central to fast legged
locomotion \cite{margolis2024}. The gate that decides promotion is treated as a tuning
constant; I analyse it as an estimator.

\textbf{Evaluation methodology.} Agarwal \emph{et al.} \cite{agarwal2021} and Patterson
\emph{et al.} \cite{patterson2024} showed that deep-RL conclusions are fragile to seeds,
aggregate statistics and experimental design. Those are largely variance effects; the
biases studied here are systematic and do not shrink with sample size.

\textbf{Imitation and residual control.} Tracking rewards towards reference motions
\cite{peng2018} and adversarial motion priors \cite{peng2021} remain the basis of humanoid
skill learning \cite{he2025,liao2025}. References retargeted from human motion capture
approximately inherit the human's dynamic feasibility; hand-authored references do not.
Residual formulations learn a correction on top of a fixed controller
\cite{johannink2019}, and ASAP applies a learned delta-action model on top of a pretrained
policy \cite{he2025}.

\textbf{Growing and continually training networks.} Net2Net \cite{chen2016} introduced
function-preserving expansion. Plasticity injection \cite{nikishin2023} adds a new
network whose initial output is exactly zero, so predictions are unchanged at the moment
of injection; Neuroplastic Expansion \cite{liu2025} grows the network during training.
Loss of plasticity under continual training is now well documented \cite{dohare2024},
including for on-policy methods such as PPO \cite{juliani2024}, which is the regime of a
policy fine-tuned across many stages.

%=======================================================================
\section{Setting}
\label{sec:setting}
%=======================================================================

The robot is a simulated 1.91\,m, 69.35\,kg humanoid built directly from its CAD
assembly. The policy controls 11 joints (hip pitch, hip roll, knee, ankle
pitch and ankle roll on each leg, and a waist yaw); 10 arm and neck joints are on position
drives outside the policy. It is a Gaussian MLP (three hidden layers of 128 ELU units,
separate critic) trained with PPO \cite{schulman2017} and GAE \cite{schulman2015}
($\gamma=0.99$, $\lambda=0.95$, clip 0.2, adaptive learning rate at target KL 0.01) on
2048 environments $\times$ 24 steps per iteration. Actions are joint offsets,
$q^{\ast}=q_0+0.5\,a$, tracked by implicit PD drives at 50\,Hz over 200\,Hz physics.
Observations include base velocities, projected gravity, commands, joint states, the
previous action, and a 187-ray terrain height scan. Terrain is procedurally generated
under a promotion curriculum, and random velocity pushes are applied. All training used a
single RTX~5070~Ti laptop GPU.

I write $r_t=\sum_i w_i r_i(s_t,a_t)$ for the reward, $k$ for a curriculum level that
indexes a task distribution $d_k$, and $J(\theta;d)=\E_{\xi\sim d}\,\E_{\pi_\theta}\left[\sum_t
\gamma^t r_t\right]$ for the objective.

\begin{definition}
A \emph{proxy} is a computable quantity $m$ used to steer or judge learning in place of a
physical target $T$. A \emph{proxy divergence} is a policy, or a population of policies,
for which $m$ reports success while $T$ is absent.
\end{definition}

%=======================================================================
\section{Reward Layer}
\label{sec:reward}
%=======================================================================

\subsection{Second-order kernels permit first-order drift}

\textbf{Traditional.} Command tracking uses a Gaussian kernel,
$r_{\mathrm{trk}}=\exp(-\|\mathbf{c}-\mathbf{v}\|^2/\sigma^2)$, chosen because it is
bounded and smooth \cite{rudin2021}.

\textbf{Divergence.} Consider a zero command and a scalar drift speed $v\ge0$. Let $g(v)$
collect every other effect of drifting on return, for instance that stepping slightly
forward is easier to balance than standing still.

\begin{proposition}
If $g$ is differentiable at 0 with $g'(0)>0$, then $f(v)=\exp(-v^2/\sigma^2)+g(v)$ is
not maximised at $v=0$. If instead the tracking term is augmented with an L1 cost
$-\lambda|v|$, then $v=0$ is a local maximiser whenever $g'(0)<\lambda$.
\end{proposition}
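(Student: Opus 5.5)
Both claims reduce to one-sided derivatives at the boundary point $v=0$ of the domain $v\ge 0$. The Gaussian kernel $h(v)=\exp(-v^2/\sigma^2)$ has $h'(v)=-(2v/\sigma^2)\exp(-v^2/\sigma^2)$, so $h'(0)=0$. This flatness is the whole mechanism: to first order the kernel exerts no restoring force, while the L1 term contributes slope $-\lambda$ right at $v=0$.

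\textbf{First claim.} Since $g$ is differentiable at 0, $f$ is too, with
\[
f'(0)=h'(0)+g'(0)=g'(0)>0 .
\]
By the definition of the derivative there is $\delta>0$ such that $f(v)-f(0)>\tfrac12 g'(0)\,v>0$ for all $0<v<\delta$. Hence $f(v)>f(0)$ for small positive $v$, and $v=0$ is not even a local maximiser, let alone a global one. Only the one-sided (right) limit is needed, which is consistent with the domain $v\ge0$.

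\textbf{Second claim.} Set $\tilde f(v)=h(v)+g(v)-\lambda|v|$. Because $v\ge 0$, $|v|=v$, so $\tilde f$ is differentiable at 0 from the right, with
\[
\tilde f'(0^+)=0+g'(0)-\lambda<0 .
\]
Again by the definition of the derivative, there is $\delta>0$ with $\tilde f(v)-\tilde f(0)<\tfrac12\,(g'(0)-\lambda)\,v<0$ for $0<v<\delta$. So $\tilde f(v)<\tilde f(0)$ on a right neighbourhood of 0, and $v=0$ is a local maximiser on $[0,\delta)$.

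If one wants the statement for signed drift $v\in\mathbb{R}$, apply the same argument on the left. The left derivative of $-\lambda|v|$ is $+\lambda$, so the condition becomes $g'(0)>-\lambda$, and together the two sides give $|g'(0)|<\lambda$. I would remark that the proposition as stated, with $v\ge0$, needs only the right side. Note also that the maximiser is strict, since both inequalities above are strict.

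The only step needing care is the mismatch between the smooth first part and the nonsmooth second part. In the L1 case one cannot use $\tilde f'(0)=0$ as a stationarity test; the argument must rest on the sign of the one-sided derivative at a boundary or kink. A first-order sign condition is sufficient here, so no second-order information about $g$ is required. That is the point being illustrated: under $h$ alone, any positive slope $g'(0)>0$, however small, wins, whereas the L1 cost tolerates perturbations of slope up to $\lambda$.
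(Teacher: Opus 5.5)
Your proof is correct and follows the paper's argument. In both, the Gaussian kernel has zero slope at $v=0$, so the sign of the right derivative at $0$ decides the matter: $g'(0)$ without the L1 term and $g'(0)-\lambda$ with it. Your explicit $\delta$-bounds make precise the paper's informal phrase that $f$ ``increases to the right of 0,'' and your two-sided remark ($|g'(0)|<\lambda$ for signed drift) is a correct extension beyond the stated domain $v\ge0$.
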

\begin{IEEEproof}
The kernel has zero derivative at 0, so $f'(0)=g'(0)>0$ and $f$ increases to the right
of 0. With the L1 term, the right derivative at 0 on the domain $v\ge0$ is
$g'(0)-\lambda<0$, so 0 is a local maximiser.
\end{IEEEproof}

The kernel's loss near zero is $1-e^{-v^2/\sigma^2}\approx v^2/\sigma^2$, second order,
so any first-order incentive wins. With $\sigma=0.5$ the kernel still pays 0.85 at
0.2\,m/s of drift, and a policy trained this way crept forward on every zero command.

\textbf{Reformulation.} Pair every saturating kernel whose target is a rest state with a
first-order cost active only in that state. An L1 base-velocity cost of weight 1.5 on zero
commands reduced measured drift to 0.01--0.04\,m/s.

\subsection{Unconstrained coordinates settle on their bounds}

\textbf{Traditional.} Penalty terms are added for joints whose motion is undesirable and
omitted elsewhere, often by inheriting a reference configuration from another robot.

\textbf{Divergence.} If the reward has no term in a coordinate $q_j$, its value is set by
whatever residual, incidental dependence $g(q_j)$ the rest of the objective has. If $g$
is monotone on the feasible interval $[q_{\min},q_{\max}]$, its maximiser is a bound,
where the joint limit supplies the only opposing force. A quadratic regulariser
$-\mu(q_j-q_j^0)^2$ makes the objective strictly concave in $q_j$ and moves the
maximiser into the interior whenever $|g'|<2\mu\,(q_{\max}-q_j^0)$.

On rough terrain the waist penalty was removed, because the configuration it came from
belongs to a robot that turns with hip yaw. The waist then sat at its $+45.00^\circ$
limit for 100\,\% of steps with zero variance, while every logged statistic improved
(reward $+27.4$ against $+25.7$, falls 2.8\,\% against 5.3\,\% for the corrected policy).

\textbf{Reformulation.} Every actuated coordinate must be in the support of the reward,
either through a regulariser or a commanded target, and inside a limit barrier. In the
final configuration the waist is not penalised but \emph{commanded}: a target in
$\pm0.7$\,rad, zero half the time, is observed and rewarded by
$\exp(-(q_w-q_w^\ast)^2/0.2^2)$, which subsumes the regulariser.

%=======================================================================
\section{Curriculum Layer}
\label{sec:curriculum}
%=======================================================================

\textbf{Traditional.} A level $k$ is raised when a statistic $\bar m$ of recent
episodes crosses a threshold, $k\leftarrow k+1$ if $\bar m\ge\tau$ (or $\bar
e<\tau$ for an error), usually with a rate limit.

\subsection{Dilution}

Let a reference-tracking skill be judged by tracking error averaged over $|\mathcal J|$
joints and the $T$ steps of the motion,
$\bar e=\frac{1}{|\mathcal J|T}\sum_{j,t}|e_j(t)|$. Suppose one joint misses by $e^\ast$
during a fraction $\rho$ of the motion and every other error is at most $\epsilon$.

\begin{proposition}
$\bar e\le \rho e^\ast/|\mathcal J|+\epsilon$, so the gate $\bar e<\tau$ accepts every
such failure with $e^\ast<|\mathcal J|(\tau-\epsilon)/\rho$.
\end{proposition}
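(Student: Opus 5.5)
The plan is to split the double sum defining $\bar e$ into the contribution of the failing joint during its failure window and everything else, bound each part separately, and then read the acceptance condition off the resulting inequality. Let $j^\ast$ be the joint that misses, and let $\mathcal T^\ast\subseteq\{1,\dots,T\}$ be the set of steps on which it misses, with $|\mathcal T^\ast|=\rho T$. Reading the hypothesis as $|e_{j^\ast}(t)|\le e^\ast$ on $\mathcal T^\ast$ and $|e_j(t)|\le\epsilon$ for every other pair $(j,t)$, I write
\begin{equation*}
\sum_{j,t}|e_j(t)| = \sum_{t\in\mathcal T^\ast}|e_{j^\ast}(t)| + \sum_{(j,t)\notin\{j^\ast\}\times\mathcal T^\ast}|e_j(t)|.
\end{equation*}

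The first sum is at most $\rho T e^\ast$. The second has at most $|\mathcal J|T$ terms, each at most $\epsilon$, so it is at most $|\mathcal J|T\epsilon$. Dividing by $|\mathcal J|T$ gives
\begin{equation*}
\bar e\le \frac{\rho e^\ast}{|\mathcal J|}+\epsilon .
\end{equation*}
The crude count of $|\mathcal J|T$ in the second sum is deliberate: it slightly overcounts, but it yields the clean additive $\epsilon$ in the statement. One could instead write $(1-\rho/|\mathcal J|)\epsilon$, but the weaker form is all that is needed.

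For the gate claim, suppose $e^\ast<|\mathcal J|(\tau-\epsilon)/\rho$. Multiplying by $\rho/|\mathcal J|>0$ and adding $\epsilon$ gives $\rho e^\ast/|\mathcal J|+\epsilon<\tau$. Combined with the bound above, this yields $\bar e<\tau$, so the gate accepts. This step implicitly assumes $\rho>0$ and $\tau>\epsilon$. If $\rho=0$ there is no failure to speak of. If $\tau\le\epsilon$, the condition on $e^\ast$ is vacuous, since $e^\ast>0$.

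There is no real obstacle. The only care needed is stating the hypothesis precisely: "misses by $e^\ast$" must be read as the error being bounded by $e^\ast$ during the failure window, or exactly $e^\ast$, which is stronger. Also, the $\epsilon$ bound need only hold off the failure window, not on the failing joint there.
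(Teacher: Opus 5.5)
Your proof is correct and is the direct computation the paper has in mind; the paper gives no argument beyond noting that ``the bound is direct.'' You split the sum into the failing joint's window and the remaining terms, bound these by $\rho T e^\ast$ and $|\mathcal J|T\epsilon$, and rearrange. One small point: the rearrangement uses only $\rho>0$, so the implication holds without assuming $\tau>\epsilon$, though treating that case separately does no harm.
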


The bound is direct. With ten tracked joints, $\tau=0.25$\,rad and $\rho\approx0.3$, the
gate accepts misses up to several radians, i.e.\ it cannot reject any physically possible
failure of the principal joint. In the lineage, a gate of exactly this form accepted a
single-support leg-lift motion in which the lifting hip missed its target by
$59^\circ\approx1.03$\,rad; the logged mean error was 0.06--0.07\,rad
(Fig.~\ref{fig:curriculum}). A no-fall criterion did not help, because the degenerate
solution, keeping both feet down, never falls.

\textbf{Reformulation.} Gate on the peak, $e_\infty=\max_t\max_{j\in\mathcal
J_{\mathrm{act}}}|e_j(t)|$, over the joints the motion actually moves, averaged over
episodes; and conjoin it with an \emph{outcome predicate} measured at the contact level
(here, the lifted foot carries less than 1\,N for a minimum fraction of the swing and
reaches a minimum height relative to what the reference produces).

\begin{figure}[t]
\centering
\includegraphics[width=\columnwidth]{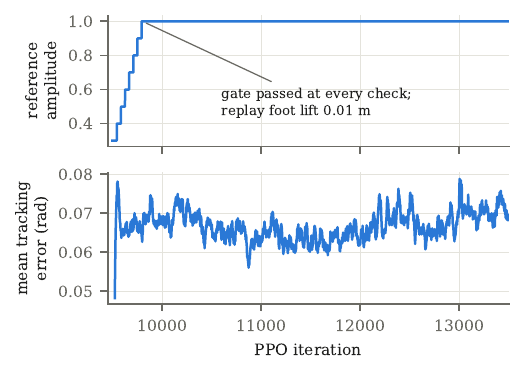}
\caption{A curriculum gated on averaged tracking error. Top: the reference amplitude
advanced at its rate limit at every check. Bottom: the gated statistic stayed far below
its 0.25\,rad threshold, although the principal joint's peak error was
$\approx$1\,rad.}
\label{fig:curriculum}
\end{figure}

\subsection{Information and refusal}

Let $G\in\{0,1\}$ be the gate's decision and $C$ the competence it should detect. If the
gate passes at every check, $H(G)=0$ and therefore $I(G;C)=0$: the gate transmits no
information about competence. A gate that advances at exactly its rate limit is
observationally equivalent to no gate. In Fig.~\ref{fig:curriculum} the seven raises
came at intervals of 41 to 44 iterations against a rate limit of one episode
($\approx$41.7 iterations), with zero refusals.

\textbf{Reformulation.} Log the refusal count as a first-class diagnostic, and treat a
curriculum with no refusals as unvalidated.

\subsection{Reachability}

A threshold is useful only if some policy can meet it:
$\sup_\pi\E_\pi[m]\ge\tau$. Executing the reference open-loop is itself a policy
$\pi_{\mathrm{ref}}$, so $\E_{\pi_{\mathrm{ref}}}[m]\ge\tau$ is a sufficient condition
that costs one rollout. When it fails, the gate may be unreachable. In the lineage, an
air-time requirement of 50\,\% of the swing exceeded the $\approx$30\,\% that the reference
itself achieves at half amplitude ($\approx$59\,\% at full), and a starting amplitude of
0.3 produced a lift too small to clear the ground once the support leg tilted.

\textbf{Reformulation.} Set $\tau\le\E_{\pi_{\mathrm{ref}}}[m]$ at every curriculum level
before training.

\subsection{Curriculum state is model state}

\textbf{Traditional.} A checkpoint stores network weights, optimiser moments and an
iteration counter. Curriculum levels live in the environment.

\textbf{Divergence.} Fine-tuning from a checkpoint optimises $J(\theta;d_{k_0})$ rather
than $J(\theta;d_{k})$: the objective silently changes. Skills trained only at high levels
receive no gradient to maintain them, and statistics compared across stages are
expectations under different distributions. In the lineage, a speed curriculum that
had reached 1.7\,m/s restarted at 1.0\,m/s in the next stage and was not raised again for
2000 iterations; terrain level fell from 5.6 to 3.5 at the same boundary.

\textbf{Reformulation.} Checkpoint the tuple $(\theta,\text{optimiser},k)$, and compare
stages only under a fixed evaluation distribution $d_{\mathrm{eval}}$.

%=======================================================================
\section{Measurement Layer}
\label{sec:measurement}
%=======================================================================

\subsection{Outcomes manufactured by exploration noise}

\textbf{Traditional.} Outcome statistics are logged from training rollouts of the
stochastic policy $a=\mu_\theta(s)+\sigma\varepsilon$.

\textbf{Divergence.} For a thresholded outcome $m=\mathbf 1[h(s)>h_0]$,
$\E_\varepsilon[m]$ can be positive when the deterministic rollout has $h<h_0$
everywhere, because noise alone reaches the threshold. The estimate converges, with
more samples, to the noise's contribution, not the policy's. In the lineage a foot-height
metric read 61--69\,\% of target during training while the deterministic policy produced
none: exploration noise of $\approx$40$^\circ$ on the ankle targets (against
$\approx$6$^\circ$ at the hips) tipped a foot carrying $\approx$360\,N onto its toe,
raising its centre of mass.

\textbf{Reformulation.} Report outcomes for $\mu_\theta$, and define outcomes by
predicates that noise cannot satisfy cheaply: here, height counts only while the foot's
contact force is below 1\,N.

\subsection{Phase aliasing in batched evaluation}

\textbf{Traditional.} A disturbance test resets $N$ parallel environments together,
waits a fixed settling time $t_0$, applies the disturbance, and reports the failure
fraction $\hat p=\frac1N\sum_i f_i$.

\textbf{Divergence.} A gait is approximately periodic with period $P$, so failure
depends on the phase $\phi$ at which the disturbance arrives, $p(m\,|\,\phi)$. With
synchronised resets, every robot has $\phi_i\approx t_0 \bmod P$, and
\begin{equation}
\hat p \xrightarrow{N\to\infty} p(m\,|\,\phi_0)\neq \bar p(m)=\frac1P\int_0^P p(m\,|\,\phi)\,d\phi .
\end{equation}
The bias is independent of $N$: more robots make a wrong number more precise. In the
lineage this produced a fall rate of 88\,\% for a 0.5\,m/s push from one side against
50\,\% for 2.0\,m/s from the same side, together with a large and entirely spurious
left--right asymmetry.

\textbf{Reformulation.} Draw each robot's disturbance time independently,
$t_i\sim\mathcal U[t_0,t_0+W]$. The induced phase density is $n/W$ or $(n+1)/W$ with
$n=\lfloor W/P\rfloor$, so it differs from the uniform $1/P$ by at most $1/W$, and is
exactly uniform when $W$ is a multiple of $P$. With $W=1$\,s, longer than one step cycle,
the magnitude series became monotonic (Table~\ref{tab:push}) and the asymmetry
disappeared; lateral and sagittal pushes then had near-equal fall rates (30\,\% against
26\,\% while walking).

\begin{table}[t]
\caption{Push Recovery After Phase Randomisation (\% of 32 Robots Falling, Pooled Over
Four Directions; Training Pushes Were $\pm$0.5\,m/s)}
\label{tab:push}
\centering
\footnotesize
\begin{tabular}{@{}lrr@{}}
\toprule
\textbf{Push (m/s)} & \textbf{Walking 0.6\,m/s} & \textbf{Standing} \\
\midrule
0.5 & 3  & 10 \\
1.0 & 27 & 38 \\
1.5 & 35 & 44 \\
2.0 & 47 & 62 \\
\bottomrule
\end{tabular}
\end{table}

\subsection{Quantifier errors in aggregation}

A claim of the form ``the policy survives magnitude $m$ in every direction'' is a
universal quantifier and must be computed with a minimum over directions. A harness that
computed the largest magnitude with zero falls in \emph{any} direction (an existential)
reported a clean result while another direction had a 38\,\% fall rate. The statistic's
quantifier must match the sentence it is used to support.

%=======================================================================
\section{Reference Layer}
\label{sec:reference}
%=======================================================================

\textbf{Traditional.} A skill is specified by a reference trajectory $q_{\mathrm{ref}}(t)$
and learned by maximising $\exp(-\|q-q_{\mathrm{ref}}\|^2_M/\sigma^2)$ alongside a large
failure penalty, with reward terms added when the result is unsatisfactory
\cite{peng2018}.

\textbf{Divergence.} Let $\mathcal F$ be the set of trajectories the robot can execute
without falling. If $q_{\mathrm{ref}}\notin\mathcal F$, a policy cannot track it, and
the large failure penalty drives the optimum to a trajectory in $\mathcal F$ that is close
to $q_{\mathrm{ref}}$ in the reward's metric $M$. Nothing requires that projection to
share the reference's qualitative structure (its contact sequence). Additional shaping
terms change $M$ and therefore which feasible trajectory is chosen, but they cannot make
the infeasible one feasible.

A static necessary condition for a single-support phase is that the centre of mass
projects into the support polygon, $p^{xy}_{\mathrm{com}}(q_{\mathrm{ref}}(t))\in\mathcal
S(t)$. In the lineage, hand-authored single-support leg-lift references satisfied every
joint limit, yet in the lifted pose the centre of mass lay 0.21\,m beside the support
foot. Seven reward, initialisation and exploration changes were tried against that
reference; each moved its own shaped term, and none produced single support in the
deterministic policy.

\textbf{Reformulation, part 1: feasibility first.} Before any reward design, check the
static condition at every keyframe and repair violations kinematically. Here a support-leg
roll $\theta$, with ankle roll $-\theta$ to keep the sole flat, moves the centre of mass
at a measured $\kappa\approx0.0095$\,m per degree, so the required shift is
$\theta\approx d/\kappa=0.21/0.0095\approx22^\circ$; at $20^\circ$, within the
$27^\circ$ limit, the residual offset is 0.02\,m. Then execute the repaired reference
open-loop as a dynamic check: if it reproduces the intended contact sequence for part of
the motion, the reference is a sound target.

\textbf{Reformulation, part 2: residual feed-forward.} Rather than asking the policy to
rediscover the reference from reward, feed it forward:
\begin{equation}
q^\ast=q_0+0.5\,a+r(t),
\end{equation}
with $r(t)=0$ when no reference is active. Then $a=0$ reproduces the (feasible) reference,
exploration is centred on it, and the learning problem changes from discovery to
regulation, as in residual RL \cite{johannink2019,he2025}. The action space is
unchanged, so existing checkpoints load without modification. Measured on the repaired
leg-lift reference, open-loop feed-forward alone held the foot clear of the ground for
the whole motion, whereas a policy required to discover the support-leg shift from
reward held that joint at $-1$ to $-3^\circ$ against a $-20^\circ$ target.

%=======================================================================
\section{Policy Growth}
\label{sec:growth}
%=======================================================================

\subsection{Function-preserving input widening}

\textbf{Traditional.} When a new capability requires new observations, the input
dimension changes, the checkpoint no longer loads, and training restarts from scratch.

\textbf{Reformulation.} Append the new inputs $\mathbf x_{\mathrm{new}}\in\mathbb R^m$
after the old ones and pad the first-layer weights of actor and critic with zeros,
\begin{equation}
W'=\begin{bmatrix}W&\mathbf 0\end{bmatrix},\qquad
W'\begin{bmatrix}\mathbf x\\\mathbf x_{\mathrm{new}}\end{bmatrix}=W\mathbf x .
\end{equation}
Actions and values are preserved exactly, so advantage estimates are not disturbed at the
handover. This is the input-side counterpart of plasticity injection \cite{nikishin2023}
and Net2Net \cite{chen2016}, which preserve a network's function while adding capacity;
here the added capacity is a new observation channel. Unlike a fully zero-initialised layer, the padded block still receives gradient,
$\partial L/\partial W_{\mathrm{new}}=\boldsymbol\delta\,\mathbf x_{\mathrm{new}}^{\top}$,
which is non-zero whenever the new inputs vary, so they are learnable from the first
update. Adam's moment estimates have the old shape and are discarded; the adaptive-KL
learning rate absorbs the resulting transient. The exploration-noise parameterisation
must also be read from the checkpoint, since $\log\sigma$ and $\sigma$ are stored under
different keys. Widening 253 inputs to 274 reproduced the parent's actions to within
$10^{-5}$.

\subsection{Split authority with observed exogenous motion}

\textbf{Traditional.} Joints outside the policy are held fixed in training, so the
training distribution of their state is a point mass at rest. Any later motion of those
joints is a distribution shift. In the lineage, a policy trained with fixed arms fell
when an arm was held out sideways: tilt grew from 0.24 to 0.97 over two seconds, because
the sustained lateral moment loads the hip-roll joint, already the most heavily loaded
in walking.

\textbf{Reformulation.} Drive the non-policy joints during training with an exogenous
process (rest poses, keyframe sequences at randomised speed, random poses within soft
limits, all smoothly ramped), and include their positions and velocities, but not the
command generating them, in the observation. The policy is then a function of the
exogenous joint state alone, so at deployment any source of motion, whether a script, a
teleoperator or another controller, presents identically. Joints the policy \emph{does}
own are steered by commanded targets rather than external scripts, as for the waist in
Section~\ref{sec:reward}.

\subsection{Empirical lineage}

Table~\ref{tab:lineage} and Fig.~\ref{fig:lineage} summarise one parameter lineage
grown this way. S1 trained from scratch with standing, observed upper-body motion and a
torso target; S2 added a forward-speed curriculum to 2\,m/s; S3 added fast upper-body
keyframe sequences that drive the torso target; S4 widened the input with a 21-dimensional
lower-limb reference (current and 0.2\,s ahead, plus an activity flag). In replay, the
S1 policy drifts 0.01--0.04\,m/s on a zero command and holds a $+34^\circ$ torso command
at 28--36$^\circ$. S3 reduced peak tilt during the fastest arm sequence from 0.34 to 0.21
and base drift from 0.37\,m to 0.14\,m. The four stages took $\approx$14.2 wall-clock
hours and $6.6\times10^{8}$ samples.

\begin{table}[t]
\caption{One Parameter Lineage. Stage-End Values Are Means Over the Final 100
Iterations of Training Telemetry; per Section~\ref{sec:curriculum}, They Are Not
Measured on a Common Distribution.}
\label{tab:lineage}
\centering
\footnotesize
\setlength{\tabcolsep}{3pt}
\begin{tabular}{@{}lL{2.3cm}rrrrr@{}}
\toprule
\textbf{Stage} & \textbf{Adds} & \textbf{Iter.} & \textbf{Obs} & \textbf{Vel.\,trk} & \textbf{Fall} & \textbf{Waist} \\
 & & & & & \textbf{(\%)} & \textbf{err (rad)} \\
\midrule
S1 & standing, observed upper body, torso target (scratch) & 0--4499 & 253 & 0.760 & 1.27 & 0.057 \\
S2 & speed curriculum to 2\,m/s & --7498 & 253 & 0.717 & 1.73 & 0.067 \\
S3 & fast upper-body sequences & --9497 & 253 & 0.758 & 1.15 & 0.065 \\
S4 & lower-limb reference (widened) & --13496 & 274 & 0.733 & 1.03 & 0.067 \\
\bottomrule
\end{tabular}
\end{table}

\begin{figure}[t]
\centering
\includegraphics[width=\columnwidth]{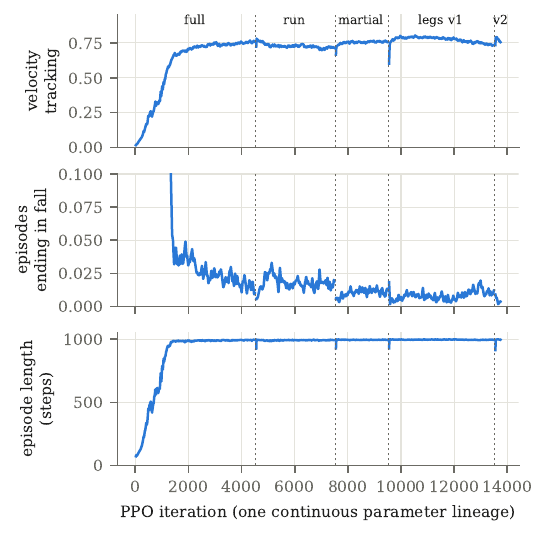}
\caption{Training telemetry across the lineage; dashed lines mark stage boundaries. Each
fine-tuning stage begins with randomised episode lengths, so its first 30 iterations are
omitted. The fall panel is clipped; S1 starts from scratch at 94\,\% falls.}
\label{fig:lineage}
\end{figure}

%=======================================================================
\section{Traditional Versus Proposed: Summary}
\label{sec:protocol}
%=======================================================================

Table~\ref{tab:summary} collects the reformulations. Read as a checklist, it is a
proxy audit: for each quantity that steers or judges learning, ask which column of the
table it is in.

\begin{table*}[t]
\caption{Traditional Formulations, Their Divergence Mechanisms, and the Proposed
Alternatives}
\label{tab:summary}
\centering
\footnotesize
\setlength{\tabcolsep}{4pt}
\begin{tabular}{@{}lL{3.9cm}L{5.0cm}L{6.2cm}@{}}
\toprule
\textbf{Layer} & \textbf{Traditional} & \textbf{Why it diverges} & \textbf{Proposed} \\
\midrule
Reward & Gaussian tracking kernel & Loss is second order at the rest state; any first-order incentive causes drift & Add a first-order (L1) cost active in the rest state \\
Reward & Penalise only undesirable joints & An unrewarded coordinate with monotone residual objective settles on its bound & Every actuated coordinate in reward support (regulariser or commanded target) and behind a limit barrier \\
\midrule
Curriculum & Gate on error averaged over joints and time & Dilution: accepts misses up to $|\mathcal J|(\tau-\epsilon)/\rho$ & Peak error over actuated joints, conjoined with a contact-level outcome predicate \\
Curriculum & Threshold set by intuition & May exceed what any policy achieves & Require $\tau\le\E_{\pi_{\mathrm{ref}}}[m]$ from an open-loop rollout \\
Curriculum & Progress judged by level reached & A gate that never refuses carries no information & Log and inspect the refusal count \\
Curriculum & Level stored in the environment & Fine-tuning silently changes the objective & Checkpoint $(\theta,\text{optimiser},k)$; compare on fixed $d_{\mathrm{eval}}$ \\
\midrule
Measurement & Outcomes from stochastic rollouts & Noise manufactures thresholded outcomes & Deterministic evaluation; contact-verified predicates \\
Measurement & Synchronised batched disturbance & Phase aliasing; bias independent of $N$ & Per-robot disturbance time over $W\ge P$ \\
Measurement & Aggregate by convenience & Existential computed for a universal claim & Match the quantifier to the claim \\
\midrule
Reference & Track the reference; reshape reward on failure & Infeasible reference: optimum is a projection onto $\mathcal F$ with different contacts & Static feasibility test, kinematic repair, open-loop check; residual feed-forward \\
\midrule
Growth & Retrain when observations change & Discards learned skills & Zero-padded input widening (function preserving) \\
Growth & Freeze non-policy joints in training & Point-mass training distribution & Observed exogenous motion with hidden command \\
\bottomrule
\end{tabular}
\end{table*}

%=======================================================================
\section{Discussion}
%=======================================================================

\textbf{Why the reward is not special.} Under the traditional view the reward is the
only optimised quantity, and so the only one that can be hacked. Of the divergences
above, the curriculum dilution and the noise-manufactured outcome were selected for by
optimisation even though neither quantity is a reward; phase aliasing and the quantifier
error needed no optimisation at all; and an infeasible reference is a specification
error that no optimiser can repair. What unites them is that each reported success
through a channel the engineer had already decided to trust. Validation effort should
follow trust, not only optimisation pressure.

\textbf{Cost.} None of the reformulations requires additional training. They cost one
open-loop rollout per reference and curriculum level, one deterministic replay per
reported outcome, a randomised timing schedule in the evaluation harness, and a few
lines in the checkpoint format.

\textbf{Scope.} All results are simulated, from a single training seed. The robot's
mass model derives from solid-material density, and actuator limits are those of the
simulation rather than the intended motors.

%=======================================================================
\section{Conclusion}
%=======================================================================

The traditional RL pipeline for legged robots treats the reward as the one place where a
proxy can diverge from its target. I have argued, with formal conditions and measured
instances, that curriculum gates, evaluation statistics and reference motions are proxies
of the same kind, and that each has a precise reformulation: first-order costs at rest
states, peak and outcome-grounded gates that are reachable and informative, curriculum
state saved with the model, deterministic and phase-desynchronised evaluation, and
feasibility-first references with residual feed-forward. Combined with function-preserving
input widening and split authority, these let a single humanoid policy grow across stages
without retraining, while keeping every number that steers it honest.

%=======================================================================
\section*{Reproducibility}
%=======================================================================

All figures are regenerated from the original training logs by the scripts accompanying
this manuscript. Environment configurations and reward terms are available from the
author on request.

\section*{Acknowledgment}
This work was carried out independently, without institutional affiliation or external
funding. It builds on NVIDIA Isaac Sim, IsaacLab and the \texttt{rsl\_rl}
implementation of PPO.

%=======================================================================


\begin{thebibliography}{00}

\bibitem{rudin2021}
N.~Rudin, D.~Hoeller, P.~Reist, and M.~Hutter, ``Learning to walk in minutes using
massively parallel deep reinforcement learning,'' in \emph{Proc. Conf. Robot Learning
(CoRL)}, 2021.

\bibitem{mittal2025}
M.~Mittal, P.~Roth \emph{et al.}, ``Isaac Lab: A GPU-accelerated simulation framework
for multi-modal robot learning,'' \emph{arXiv:2511.04831}, 2025.

\bibitem{peng2018}
X.~B.~Peng, P.~Abbeel, S.~Levine, and M.~van~de~Panne, ``DeepMimic: Example-guided deep
reinforcement learning of physics-based character skills,'' \emph{ACM Trans. Graphics},
vol.~37, no.~4, 2018.

\bibitem{skalse2022}
J.~Skalse, N.~Howe, D.~Krasheninnikov, and D.~Krueger, ``Defining and characterizing
reward hacking,'' in \emph{Advances in Neural Information Processing Systems
(NeurIPS)}, 2022.

\bibitem{pan2022}
A.~Pan, K.~Bhatia, and J.~Steinhardt, ``The effects of reward misspecification: Mapping
and mitigating misaligned models,'' in \emph{Proc. Int. Conf. Learning Representations
(ICLR)}, 2022.

\bibitem{gao2023}
L.~Gao, J.~Schulman, and J.~Hilton, ``Scaling laws for reward model overoptimization,''
in \emph{Proc. Int. Conf. Machine Learning (ICML)}, PMLR vol.~202, pp.~10835--10866,
2023.

\bibitem{laidlaw2025}
C.~Laidlaw, S.~Singhal, and A.~D.~Dragan, ``Correlated proxies: A new definition and
improved mitigation for reward hacking,'' in \emph{Proc. Int. Conf. Learning
Representations (ICLR)}, 2025.

\bibitem{tang2025}
C.~Tang, B.~Abbatematteo, J.~Hu, R.~Chandra, R.~Mart\'{i}n-Mart\'{i}n, and P.~Stone,
``Deep reinforcement learning for robotics: A survey of real-world successes,''
\emph{Annu. Rev. Control, Robot., Auton. Syst.}, vol.~8, pp.~153--188, 2025.

\bibitem{radosavovic2024}
I.~Radosavovic, T.~Xiao, B.~Zhang, T.~Darrell, J.~Malik, and K.~Sreenath, ``Real-world
humanoid locomotion with reinforcement learning,'' \emph{Science Robotics}, vol.~9,
no.~89, 2024.

\bibitem{he2025}
T.~He \emph{et al.}, ``ASAP: Aligning simulation and real-world physics for learning agile
humanoid whole-body skills,'' in \emph{Proc. Robotics: Science and Systems (RSS)}, 2025.

\bibitem{liao2025}
Q.~Liao, T.~E.~Truong, X.~Huang, Y.~Gao, G.~Tevet, K.~Sreenath, and C.~K.~Liu,
``BeyondMimic: From motion tracking to versatile humanoid control via guided diffusion,''
\emph{arXiv:2508.08241}, 2025.

\bibitem{ma2024}
Y.~J.~Ma, W.~Liang, G.~Wang, D.-A.~Huang, O.~Bastani, D.~Jayaraman, Y.~Zhu, L.~Fan, and
A.~Anandkumar, ``Eureka: Human-level reward design via coding large language models,''
in \emph{Proc. Int. Conf. Learning Representations (ICLR)}, 2024.

\bibitem{margolis2024}
G.~B.~Margolis, G.~Yang, K.~Paigwar, T.~Chen, and P.~Agrawal, ``Rapid locomotion via
reinforcement learning,'' \emph{Int. J. Robotics Research}, vol.~43, no.~4,
pp.~572--587, 2024.

\bibitem{agarwal2021}
R.~Agarwal, M.~Schwarzer, P.~S.~Castro, A.~Courville, and M.~G.~Bellemare, ``Deep
reinforcement learning at the edge of the statistical precipice,'' in \emph{Advances in
Neural Information Processing Systems (NeurIPS)}, 2021.

\bibitem{patterson2024}
A.~Patterson, S.~Neumann, M.~White, and A.~White, ``Empirical design in reinforcement
learning,'' \emph{J. Machine Learning Research}, vol.~25, pp.~1--63, 2024.

\bibitem{peng2021}
X.~B.~Peng, Z.~Ma, P.~Abbeel, S.~Levine, and A.~Kanazawa, ``AMP: Adversarial motion
priors for stylized physics-based character control,'' \emph{ACM Trans. Graphics},
vol.~40, no.~4, 2021.

\bibitem{johannink2019}
T.~Johannink \emph{et al.}, ``Residual reinforcement learning for robot control,'' in
\emph{Proc. IEEE Int. Conf. Robotics and Automation (ICRA)}, 2019.

\bibitem{chen2016}
T.~Chen, I.~Goodfellow, and J.~Shlens, ``Net2Net: Accelerating learning via knowledge
transfer,'' in \emph{Proc. Int. Conf. Learning Representations (ICLR)}, 2016.

\bibitem{nikishin2023}
E.~Nikishin, J.~Oh, G.~Ostrovski, C.~Lyle, R.~Pascanu, W.~Dabney, and A.~Barreto, ``Deep
reinforcement learning with plasticity injection,'' in \emph{Advances in Neural
Information Processing Systems (NeurIPS)}, 2023.

\bibitem{liu2025}
J.~Liu, J.~Obando-Ceron, A.~Courville, and L.~Pan, ``Neuroplastic expansion in deep
reinforcement learning,'' in \emph{Proc. Int. Conf. Learning Representations (ICLR)},
2025.

\bibitem{dohare2024}
S.~Dohare, J.~F.~Hernandez-Garcia, Q.~Lan, P.~Rahman, A.~R.~Mahmood, and R.~S.~Sutton,
``Loss of plasticity in deep continual learning,'' \emph{Nature}, vol.~632,
pp.~768--774, 2024.

\bibitem{juliani2024}
A.~Juliani and J.~T.~Ash, ``A study of plasticity loss in on-policy deep reinforcement
learning,'' in \emph{Advances in Neural Information Processing Systems (NeurIPS)}, 2024.

\bibitem{schulman2017}
J.~Schulman, F.~Wolski, P.~Dhariwal, A.~Radford, and O.~Klimov, ``Proximal policy
optimization algorithms,'' \emph{arXiv:1707.06347}, 2017.

\bibitem{schulman2015}
J.~Schulman, P.~Moritz, S.~Levine, M.~Jordan, and P.~Abbeel, ``High-dimensional
continuous control using generalized advantage estimation,'' \emph{arXiv:1506.02438},
2015.

\end{thebibliography}
\end{document}